\documentclass[10pt,twocolumn,letterpaper]{article}

\usepackage{wacv}



\wacvfinalcopy 


\ifwacvfinal\pagestyle{empty}\fi
\setcounter{page}{1}

\usepackage{color}
\usepackage{times}
\usepackage{epsfig}
\usepackage{graphicx}
\usepackage{amsmath}
\usepackage{amssymb}
\usepackage{amsthm}

\usepackage{times}
\usepackage{epsfig}
\usepackage{subfigure}
\usepackage{url}

\usepackage{enumerate}
\usepackage{stfloats}

\usepackage[vlined, ruled]{algorithm2e}
\usepackage{setspace}
\usepackage{caption}
\usepackage{bm}
\usepackage{algorithm2e}
\usepackage{bbold}


\renewcommand\cdots{...}

\newcommand{\bigoh}{\mathcal{O}}

\newcommand{\enorm}[1]{\left\|{#1}\right\|_2}

\newcommand{\simplex}[1]{\Delta^{#1}}
\newcommand{\set}[1]{\left\{#1\right\}}

\newcommand{\suptensor}[1]{\mathfrak{S}^{d}}

\newcommand{\vS}{\boldsymbol{S}}


\newcommand{\comment}[1]{}
\newcommand{\seqset}{\mathcal{P}}

\newtheorem{theorem}{Theorem}
\newtheorem{definition}{Definition}
\newtheorem{proposition}{Proposition}

\newcommand{\reals}[1]{\mathbb{R}^{#1}}
\newcommand{\myexp}[1]{e^{\left({#1}\right)}}

\renewcommand\cdots{...}

\newcommand{\pivotset}{\mathcal{Z}}
\newcommand{\seq}{S}

\newcommand{\labels}{\mathcal{L}}

\newcommand{\allframesset}{\mathcal{F}}
\renewcommand{\seqset}{\mathcal{S}}
\newcommand{\classifier}{P}
\newcommand{\op}[1]{\otimes_{#1}}
\DeclareMathOperator*{\HOK}{HOK}
\DeclareMathOperator*{\HOSVD}{HOSVD}
\DeclareMathOperator*{\sign}{sign}
\newcommand{\vA}{\mathcal{A}}

\newcommand{\hatclassifier}{\hat{P}}
\newcommand{\card}[1]{\left|{#1}\right|}
\begin{document}

\title{Higher-order Pooling of CNN Features via\\
Kernel Linearization for Action Recognition}

\author{
Anoop Cherian${^{1,3}}$ \hspace{2cm} Piotr Koniusz${^{2,3}}$ \hspace{2cm} Stephen Gould${^{1,3}}$ \\
${^1}$ Australian Center for Robotic Vision \hspace{2cm}  ${^2}$Data61/CSIRO, \\
${^3}$ The Australian National University, Canberra, Australia
\\
{\tt\small anoop.cherian@anu.edu.au} \hspace{1cm}
{\tt\small piotr.koniusz@data61.csiro.au} \hspace{1cm}
{\tt\small stephen.gould@anu.edu.au} \hspace{1cm}
}

\maketitle
\ifwacvfinal\thispagestyle{empty}\fi

\begin{abstract}
Most successful deep learning algorithms for action recognition extend models designed for image-based tasks such as object recognition to video. Such extensions are typically trained for actions on single video frames or very short clips, and then their predictions from sliding-windows over the video sequence are pooled for recognizing the action at the sequence level. Usually this pooling step uses the first-order statistics of frame-level action predictions. In this paper, we explore the advantages of using higher-order correlations; specifically, we introduce Higher-order Kernel (HOK) descriptors generated from the late fusion of CNN classifier scores from all the frames in a sequence. To generate these descriptors, we use the idea of kernel linearization. Specifically, a similarity kernel matrix, which captures the temporal evolution of deep classifier scores, is first linearized into kernel feature maps. The HOK descriptors are then generated from the higher-order co-occurrences of these feature maps, and are then used as input to a video-level classifier. We provide experiments on two fine-grained action recognition datasets, and show that our scheme leads to state-of-the-art results.

\end{abstract}

\section{Introduction}
\label{sec:intro}
With the resurgence of efficient deep learning algorithms, recent years have seen significant advancements in several fundamental problems in computer vision, including human action recognition in videos~\cite{simonyan2014two,tran2014c3d,karpathy2014large,donahue2014long}. However, despite this progress, solutions for action recognition are far from being practically useful, especially in a real-world setting. This is because real-world actions are often subtle, may use hard-to-detect tools (such as knives, peelers, etc.), may involve strong appearance or human pose variations, may be of different durations, or done at different speeds, among several other complicating factors. In this paper, we study a difficult subset of the general problem of action recognition, namely~\emph{fine-grained action recognition}. This problem category is characterized by actions that have very weak intra-class appearance similarity (such as cutting tomatoes versus cutting cucumbers), while strong inter-class similarity (peeling cucumbers versus slicing cucumbers). Figure~\ref{fig:illustration} illustrates two fine-grained actions. 

\begin{figure}
	\centering   
	\includegraphics[width=3.5cm]{./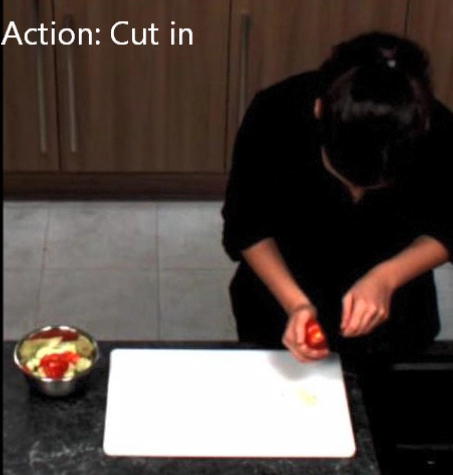}\hspace{0.3cm}
	\includegraphics[width=3.5cm]{./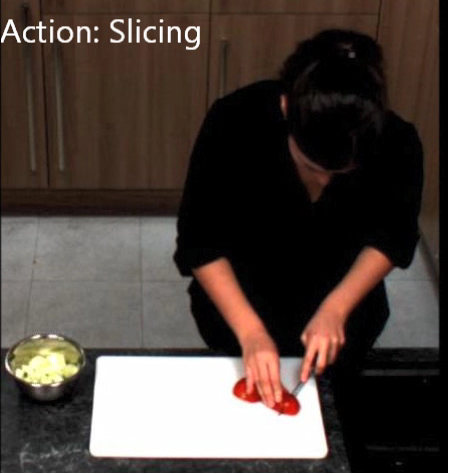}
	\caption{Fine-grained action instances from two different action categories: \emph{cut-in} (left) and \emph{slicing} (right). These instances are from the MPII cooking activities dataset~\cite{rohrbach2012database}.}
     \label{fig:illustration}
\end{figure}

Unsurprisingly, the most recent trend for fine-grained activity recognition is based on convolutional neural networks (CNN)~\cite{cheron2015p,ji20133d}. These schemes extend frameworks developed for general purpose action recognition~\cite{simonyan2014two} into the fine-grained setting by incorporating heuristics that extract auxiliary discriminative cues, such as the position of body parts or indicators of human-to-object interaction. A technical difficulty when extending CNN-based methods to videos is that unlike objects in images, the actions in videos are spread across several frames. Thus to correctly infer actions a CNN must be trained on the entire video sequence. However, the current computational architectures are prohibitive in using more than a few tens of frames, thus limiting the size of the temporal receptive fields. For example, the two stream model~\cite{simonyan2014two} uses single frames or a tiny sets of optical flow images for learning the actions. One may overcome this difficulty by switching to  recurrent networks~\cite{baccouche2011sequential} which typically require large training sets for effective learning.

As is clear, using single frames to train CNNs might be insufficient to capture the dynamics of actions effectively, while a large stack of frames requires a larger number of CNN parameters that result in model overfitting, thereby demanding larger training sets and computational resources. This problem also exists in other popular CNN architectures such as 3D CNNs~\cite{tran2014c3d,ji20133d}. Thus, state-of-the-art deep action recognition models are usually trained to generate useful features from short video clips that are then pooled to generate holistic sequence level descriptors, which are then used to train a linear classifier on action labels. For example, in the two-stream model ~\cite{simonyan2014two}, the soft-max scores from the final CNN layers from the RGB and optical flow streams are combined using average pooling. Note that average pooling captures only the first-order correlations between the scores; a higher-order pooling \cite{me_tensor} that captures higher-order correlations between the CNN features can be more appropriate, which is the main motivation for the scheme proposed in this paper.

Specifically, we assume a two-stream CNN framework as suggested in~\cite{simonyan2014two} with separate RGB image and optical flow streams. Each of these streams is trained on single RGB or optical flow frames from the sequences against the action labels. As noted above, while CNN classifier predictions at the frame-level might be very noisy, we posit that the correlations between the temporal evolution of classifier scores can capture useful action cues which may help improve recognition performance. Intuitively, some of the actions might have unlabelled pre-cursors (such as ~\emph{Walking},~\emph{Picking}, etc.). Using higher-order action occurrences in a sequence may be able to capture such pre-cursors, leading to better discrimination of the action, while they may be ignored as noise when using a first-order pooling scheme. In this paper, we use this intuition to develop a theoretical framework for action recognition using higher-order pooling on the two-stream classifier scores.



Our pooling scheme is based on kernel linearization---a simple technique that decomposes a similarity kernel matrix computed on the input data in terms of a set of anchor points (pivots). Using a kernel (specifically, a Gaussian kernel) offers richer representational power, e.g., by embedding the data in an infinite dimensional Hilbert space. The use of the kernel also allows us to easily incorporate additional prior knowledge about the actions. For example, we show in Section~\ref{sec:method} how to impose a soft-temporal grammar on the actions in the sequence. Kernels are a class of positive definite objects that belong to a non-linear but Riemannian geometry, and thus directly evaluating them for use in a dual SVM (c.f. primal SVM) is computationally expensive. 
\comment{might demand non-linear classifiers.} Therefore, these kernels are usually linearized into feature maps so that fast linear classifiers can be applied instead. In this paper, we use a linearization technique developed for shift-invariant kernels (Section.~\ref{sec:prelims}). Using this technique, we propose~\emph{Higher-order Kernel} descriptors (Section.~\ref{sec:HOK}) that capture the higher-order co-occurrences of the kernel maps against the pivots. We apply a non-linear operator (such as~\emph{eigenvalue power normalization}~\cite{me_tensor}) to the HOK descriptors as it is known to lead to superior classification performance. The HOK descriptor is then vectorized and used for training actions. Note that the HOK descriptors, that we propose here, belong to the more general family of third-order super-symmetric tensor (TOSST) descriptors introduced in~\cite{sparse_tensor_cvpr,tensor_eccv}.

We provide experiments (Section~\ref{sec:expts}) using the proposed scheme on two standard action datasets, namely (i) the MPII Cooking activities dataset~\cite{rohrbach2012database}, and (ii) the JHMDB dataset~\cite{jhuang2013towards}. Our results demonstrate that higher-order pooling is useful and can perform competitively to the state-of-the-art methods.

\section{Related Work}
\label{sec:related_work}
Initial approaches to tackle fine-grained action recognition have been direct extensions of schemes developed for the general classification problems, mainly based on hand-crafted features. A few notable such approaches include ~\cite{pishchulin2014fine,rohrbach2012database,rohrbach2015recognizing,wang2013dense}, in which features such as HOG, SIFT, HOF, etc., are first extracted at spatio-temporal interest point locations (e.g., following dense trajectories) and then fused and used to train a classifier. However, the recent trend has moved towards data driven feature learning via deep learning  platforms~\cite{krizhevsky2012imagenet,simonyan2014two,ji20133d,tran2014c3d,donahue2014long,yue2015beyond}. As alluded to earlier, the lack of sufficient annotated video data, and the need for expensive computational infrastructure, makes direct extension of these frameworks (which are primarily developed for image recognition tasks) challenging for video data, thus demanding efficient representations.


Another promising setup for fine-grained action recognition has been to use mid-level features such as human pose. Clearly, estimating the human pose and developing action recognition systems on it disentangles the action inference from operating directly on pixel-level, thus allowing for higher-level of sophisticated action reasoning~\cite{rohrbach2012database,wang2013approach,zuffi2013puppet,cheron2015p}. Although, there have been significant advancements in the pose estimation recently~\cite{wei2016cpm,Insafutdinov2016}, most of these models are computationally demanding and thus difficult to scale to millions of video frames that form standard datasets. 

A different approach to fine-grained recognition is to detect and analyze human-object interactions in the videos. Such a technique is proposed in \emph{Zhou et al.}~\cite{zhou2015interaction}. Their method starts by generating region proposals for human-object interactions in the scene, extracts visual features from these regions and trains a classifier for action classes on these features. A scheme based on tracking human hands and their interactions with objects is presented in Ni et al.~\cite{ni2014multiple}. Hough forests for action recognition are proposed in Gall et al.~\cite{gall2011hough}. Although recognizing objects may be useful, they may not be easily detectable in the context of fine-grained actions.

We also note that there have been several other deep learning models devised for action modeling such as using 3D convolutional filters~\cite{ji20133d}, recurrent neural networks~\cite{baccouche2011sequential}, long-short term memory networks~\cite{donahue2014long}, and large scale video classification architectures~\cite{karpathy2014large}. These models demand huge collections of videos for effective training, which are usually unavailable for fine-grained activity tasks and thus the applicability of these models is yet to be ascertained.

Pooling has been a useful technique for reducing the size of video representations, thereby enabling the applicability of efficient machine learning algorithms to this data modality. Recently, a pooling scheme preserving the temporal order of the frames is proposed by \emph{Fernando et al.}~\cite{Fernando_2015_CVPR} by solving a Rank-SVM formulation. In \emph{Wang et al.}~\cite{wang2015action}, deep features are fused along action trajectories in the video. Correlations between space-time features are proposed in~\cite{shechtman2005space}. Early and late fusion of CNN feature maps for action recognition are proposed in~\cite{karpathy2014large,yue2015beyond}. Our proposed higher-order pooling scheme is somewhat similar to the second- and higher-order pooling approaches proposed in~\cite{carreira2012semantic} and ~\cite{me_tensor}
that generate  
representations from low-level descriptors for the task of semantic segmentation of images and object category recognition, respectively. Moreover, our HOK descriptor is inspired by the sequence compatibility kernel (SCK) descriptor introduced in~\cite{tensor_eccv} which pools higher-order occurrences of feature maps from skeletal body joints for action recognition. 
In contrast, we use the frame-level prediction vectors (output of fc8 layers) from the deep classifiers to generate our pooled descriptors, therefore, the size of our pooled descriptors is a function of the  number of action classes. Moreover, unlike SCK, that uses pose skeletons, we use raw action videos directly.
Our work is also different from works such as~\cite{sparse_tensor_cvpr,vasilescu2002multilinear} in which tensor descriptors are proposed on hand-crafted features. In this paper, we show how CNNs could benefit from higher-order pooling for the application of fine-grained action recognition.



\comment{
	8) Action bank: A high-level representation of activity in video
	10) Spatio-temporal convolutional sparse auto-encoder for sequence classification
	11) Convolutional learning of spatio-temporal features
	13) “Unsupervised learning of video representations using lstms
	14) Trajectory based modeling of human actions with motion reference points
	15) Better exploiting motion for better action recognition
	17) Spatio-temporal relationship match: Video structure comparison for recognition of complex human activities
	18) Action snippets: How many frames does human action recognition require?
	20) Active: Activity concept transitions in video event classification.
	
	List of papers to review:
	\begin{itemize}
		\item : 3D Convolutional Neural Networks for Human Action Recognition http://www.dbs.ifi.lmu.de/~yu_k/icml2010_3dcnn.pdf
		\item : Sequential deep learning for Action recognition: http://link.springer.com/chapter/10.1007
		\item Deep Ranking paper: very useful reference:http://users.eecs.northwestern.edu/~jwa368/pdfs/deep_ranking.pdf
		
		\item Weakly supervised learning of interactions
		between humans and objects https://www.vision.ee.ethz.ch/publications/papers/articles/eth_biwi_00828.pdf
		
		\item Recognizing Human-Object Interactions in
		Still Images by Modeling the Mutual Context
		of Objects and Human Poses http://vision.stanford.edu/pdf/yaopami12.pdf
		
		\item Learning person-object interactions for
		action recognition in still images http://www.di.ens.fr/willow/pdfscurrent/delaitre_NIPS11.pdf
		
		\item Fine-grained Activity Recognition with Holistic and Pose based Features http://arxiv.org/pdf/1406.1881.pdf - done
		\item Exemplar-based Recognition of Human-Object Interactions 
		\item What’s Cookin’? Interpreting Cooking Videos using Text, Speech and Vision http://arxiv.org/pdf/1503.01558.pdf
		\item Multiple Granularity Analysis for Fine-grained Action Detection  http://www.cv-foundation.org/openaccess/content_cvpr_2014/papers/Ni_Multiple_Granularity_Analysis_2014_CVPR_paper.pdf
		\item P-CNN: Pose-based CNN Features for Action Recognition  http://arxiv.org/pdf/1506.03607.pdf
		\item Action Recognition by Hierarchical Mid-level Action Elements  http://arxiv.org/pdf/1508.07654.pdf
		\item Recognizing Fine-Grained and Composite Activities using Hand-Centric Features and Script Data http://arxiv.org/pdf/1502.06648.pdf
	\end{itemize}
	
}
\section{Background}
\label{sec:prelims}
In this section, we first review the tensor notation that we use in the following sections. This precedes an introduction to the necessary theory behind kernel linearization on which our descriptor framework is based. 

\subsection{Notation}

Let $a\in\reals{d}$ be a $d$-dimensional vector. Then, we use $\vA=a^{\op{r}}$ to denote the $r$-mode super-symmetric tensor generated by the $r$-th order outer-product of $a$, where the element at the $\left(i_1,i_2,\cdots, i_{r}\right)$-th index is given by $\Pi_{j=1}^r a_{i_j}$. We use the notation $\vA=\vS\times_{k=1}^n P_k$ for matrices $P_k$ and a tensor $\vS$ to denote that $\vA_{(i_1,i_2,\cdots,i_n)}=\sum_{j_1}\sum_{j_2}\cdots\sum_{j_n}\vS_{(i_1,i_2,\cdots,i_n)}P_1^{(i_1,j_1)} P_2^{(i_2,j_2)}\cdots P_n^{(i_n,j_n)}$. This notation arises in the Tucker decomposition of higher-order tensors, see \cite{lathauwer_hosvd, kolda_tensorrew} for details. We note that the inner-product between two such tensors follows the general element-wise product and summation, as is typically used in linear algebra. We assume in the sequel that the order $r$ is ordinal and greater than zero. We use the notation $\langle \cdot , \cdot \rangle$ to denote the standard Euclidean inner product and $\simplex{d}$ for the $d$-dimensional simplex.

%

\subsection{Kernel Linearization}
Let $X=\set{x_t\in\reals{n}: t=1,2,\ldots, T}$ be a set of data instances produced by some dynamic process at discrete time instances $t$. Let $K$ be a kernel matrix created from $X$, the $ij$-th element of which is: 
\begin{equation}
K_{ij} = \langle \phi(x_i), \phi(x_j)\rangle,
\end{equation}
where $\phi(\cdot)$ represents some kernelized similarity map. 

\begin{theorem}[Linear Expansion of the Gaussian Kernel]
	For all $x$ and $x'$ in $\reals{n}$ and $\sigma>0$, 
	\begin{align}
	&\psi\left(\frac{x-x'}{\sigma}\right) = \myexp{-\frac{1}{2\sigma^2}\enorm{x-x'}^2} \\\nonumber &\quad\quad=\left(\frac{2}{\pi\sigma^2}\right)^\frac{n}{2}\int_{z\in\reals{n}}\!\!\! \myexp{-\frac{1}{\sigma^2}\enorm{x-z}^2}\myexp{-\frac{1}{\sigma^2}\enorm{x'-z}^2} dz\\
	\label{eq:gk_exp}
	\end{align}
\end{theorem}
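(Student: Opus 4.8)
The plan is to verify the identity by evaluating the integral on the right directly and checking that it collapses to the left-hand side; this is a standard Gaussian-convolution computation. First I would merge the two exponential factors in the integrand into a single exponential with argument $-\frac{1}{\sigma^2}\big(\enorm{x-z}^2+\enorm{x'-z}^2\big)$, and expand each squared norm as $\enorm{x}^2-2\langle x,z\rangle+\enorm{z}^2$ (and likewise for $x'$), collecting the terms that are quadratic, linear, and constant in $z$.

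The central algebraic step is to complete the square in $z$. The quadratic-plus-linear part $2\enorm{z}^2-2\langle x+x',z\rangle$ I would rewrite as $2\enorm{z-\tfrac{x+x'}{2}}^2$ minus the compensating constant $\half\enorm{x+x'}^2$. Combining this compensating term with the leftover constant $\enorm{x}^2+\enorm{x'}^2$ should collapse, via the elementary identity $\enorm{x}^2+\enorm{x'}^2-\half\enorm{x+x'}^2=\half\enorm{x-x'}^2$, to exactly $\half\enorm{x-x'}^2$. This $z$-independent piece factors out of the integral and produces precisely the target $\myexp{-\frac{1}{2\sigma^2}\enorm{x-x'}^2}$.

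What remains inside the integral is the isotropic Gaussian $\int_{\reals{n}}\myexp{-\frac{2}{\sigma^2}\enorm{z-\frac{x+x'}{2}}^2}\,dz$. After the translation $u=z-\frac{x+x'}{2}$ this factorizes across the $n$ coordinates into a product of one-dimensional integrals $\int_{\mathbb{R}}e^{-2u_i^2/\sigma^2}\,du_i=\sqrt{\pi\sigma^2/2}$, yielding $\big(\pi\sigma^2/2\big)^{n/2}$. Multiplying by the prefactor $\big(2/(\pi\sigma^2)\big)^{n/2}$ cancels this factor exactly, leaving the Gaussian kernel on the left as claimed.

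The computation presents no genuine obstacle beyond careful bookkeeping; the two points worth checking are that the cross terms combine to give the factor $\half$ (rather than $1$) in the exponent of the final kernel, and correspondingly that the normalizing constant $\big(2/(\pi\sigma^2)\big)^{n/2}$ is exactly the reciprocal of the Gaussian integral's value, so that the two cancel cleanly. Both facts follow directly from the square-completion above, so the result reduces to a verification rather than an argument with a hard core.
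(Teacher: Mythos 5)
Your computation is correct and complete: the square-completion identity $\enorm{x}^2+\enorm{x'}^2-\half\enorm{x+x'}^2=\half\enorm{x-x'}^2$ is exactly right, the residual Gaussian integral evaluates to $\left(\pi\sigma^2/2\right)^{n/2}$, and this cancels the stated prefactor, so the identity holds as claimed. Note, however, that the paper itself does not prove this theorem at all---its ``proof'' is a one-line pointer to external references (Jebara et al.\ on probability product kernels and the convolutional kernel networks paper), where this Gaussian convolution identity is established. So your argument is not a variant of the paper's proof; it \emph{is} the proof that the paper outsources, reconstructed from scratch. What your self-contained verification buys is transparency: one sees directly why the bandwidth halves (the factor $\half$ in the exponent of the left-hand side comes from the cross-term bookkeeping) and why the normalization $\left(2/(\pi\sigma^2)\right)^{n/2}$ is forced, rather than taking both on faith. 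What the citations buy instead is context: those references situate the identity within the broader machinery of product kernels, Nystr\"{o}m approximation, and random feature maps, which is what the paper then exploits when it replaces the integral over $z$ by a finite pivot set. Your proof proposal stands on its own with no gaps.
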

\begin{proof}
See~\cite{jebara_prodkers,ckn}.
\end{proof}

We can approximate the linearized kernel by choosing a finite set of pivots $z\in\pivotset=\set{z_1, z_2,\cdots, z_K;\sigma_1,\sigma_2,\cdots, \sigma_K}$\footnote{To simplify the notation, we assume that the pivot set includes the bandwidths $\sigma$ associated with each pivot as well.}. Using $\pivotset$, we can rewrite~\eqref{eq:gk_exp} as: 
\begin{equation}
\psi(x-x') \approx \langle\ \tilde{\phi}(x; \pivotset),\ \tilde{\phi}(x'; \pivotset)\ \rangle
\end{equation}
where 
\begin{equation}
	\tilde{\phi}(x; \pivotset)=\left(\,e^{-\frac{1}{\sigma_1^2} \enorm{x-z_1}^2}, \ldots, e^{-\frac{1}{\sigma_K^2} \enorm{x-z_K}^2}\,\right).
	\label{eq:feature_map}
\end{equation}
We call $\tilde{\phi}(x;\pivotset)$ as the approximate feature map for the input data point $x$. 

In the sequel, we linearize the constituent kernels as this allows us to obtain linear maps, which results in favourable computational complexities, i.e., we avoid computing a kernel explicitly between tens of thousands of video sequences~\cite{tensor_eccv}. Also, see~\cite{bo2009efficient,williams2001using} for connections of our method with the Nystr\"{o}m approximation and random Fourier features~\cite{rahimi2007random}.
\section{Proposed Method}
\label{sec:method}
In this section, we first describe our overall CNN framework based on the popular two-stream deep model for action recognition~\cite{simonyan2014two}. This precedes exposition of our higher-order pooling framework, in which we introduce appropriate kernels for the task in hand. We also describe a setup for learning the parameters of the kernel maps.

\begin{figure*}[ht]
	\centering
	\includegraphics[width=17cm]{./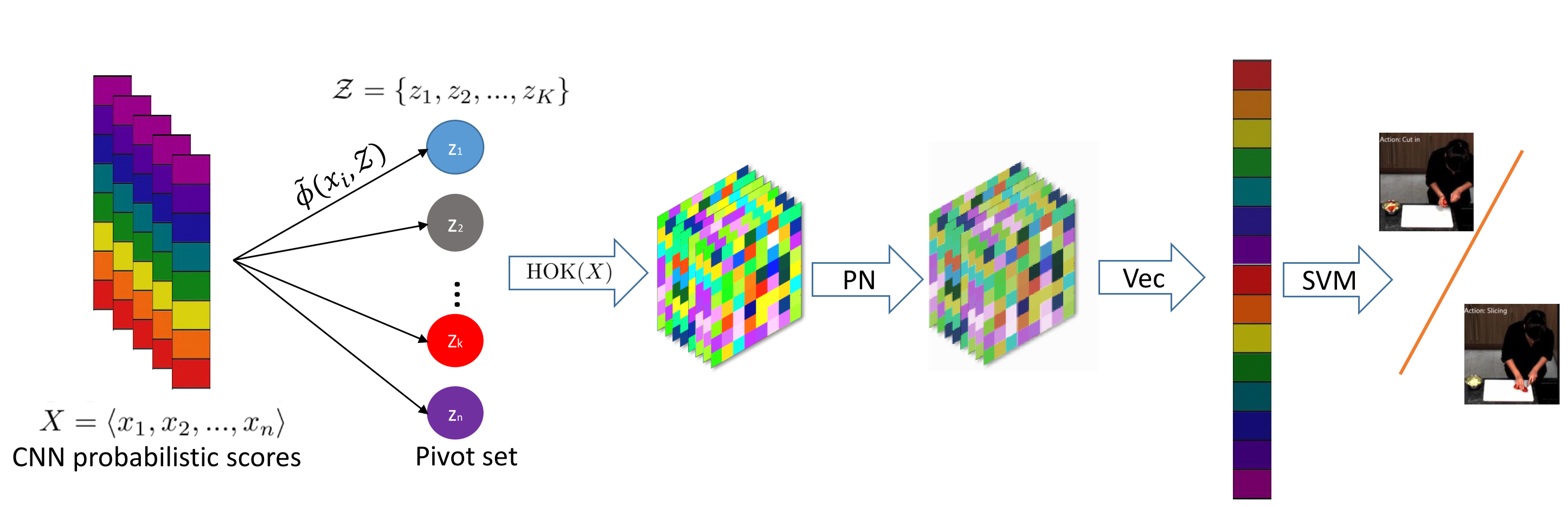}
	\caption{Our overall Higher-order Kernel (HOK) descriptor generation and classification pipeline. PN stands for eigen power-normalization.}
	\label{fig:pipeline}
\end{figure*}

\subsection{Problem Formulation}
Let $\seqset=\set{\seq_1, \seq_2, \ldots, \seq_N}$ be a set of video sequences, each sequence belonging to one of $M$ action classes with labels from the set $\labels=\set{\ell_1,\ell_2,\ldots, \ell_M}$. Let $\seq=\left<f_1, f_2,\ldots, f_n\right>$ be a sequence of frames of $\seq\in\seqset$, and let $\allframesset=\bigcup_{\seq\in\seqset}\set{f_i \mid f_i\in\seq}$. In the action recognition setup, our goal is to find a mapping from any given sequence to its ground truth label. Assume we have trained frame-level action classifiers for each class and that these classifiers cannot see all the frames in a sequence together. Suppose $\classifier_m:\allframesset\rightarrow [0,1]$ is one such classifier trained to produce a confidence score for an input frame to belong to the $m$-th class. Since a single frame is unlikely to represent well the entire sequence, the classifier $\classifier_m$ is inaccurate at determining the action at the sequence level. However, our hypothesis is that a combination of the predictions from all the classifiers across all the frames in a sequence could capture discriminative properties of the action and could improve recognition. In the sequel, we explore this possibility in the context of higher-order tensor descriptors.

\subsection{Correlations between Classifier Predictions}
Using the notations defined above, let $\left<f_1, f_2,\ldots, f_n\right>, f_i\in\seq$ denote a sequence corresponding to each frame and let $\classifier_m(f_i)$ denote the probability that a classifier trained for the $m$-th action class predicts $f_i$ to belong to class $\ell_m$. Then,
%
\begin{align}
\hatclassifier(f_i) &= \left[\classifier_1(f_i), \classifier_2(f_i),\cdots, \classifier_M(f_i)\right]^T,
\end{align}
denotes the class confidence vector for frame $i$. As described earlier, we assume that there exists a strong correlation between the confidences of the classifiers across time (temporal evolution of the classifier scores) for the frames from similar sequences; i.e., frames that are confused between different classifiers should be confused in a similar way for different sequences. To capture these correlations between the classifier predictions, we propose to use a kernel formulation on the scores from sequences, the $ij$-th entry of this kernel is as follows:
\begin{equation}
K(S_i,S_j)\!\!=\!\!\frac{1}{\Lambda}\!\!\sum_{t,u=1}^{n}\left[ \zeta_1\psi_1\!\!\left(\!\!\frac{\hatclassifier(f_{t})\!\!-\!\!\hatclassifier(f_{u})}{\sigma_F}\!\!\right)\!\!+\!\zeta_2\psi_2\left(\!\!\frac{t-u}{\sigma_T}\!\!\right)\right]^r\!\!\!,
\label{eq:7}
\end{equation}
where $\psi_1$ and $\psi_2$ are two RBF kernels. The kernel function $\psi_1$ captures the similarity between the two classifier scores at timesteps $t$ and $u$, while the kernel $\psi_2$ puts a smoothing on the length of the interval $[t,u]$. A small bandwidth $\sigma_T$ will demand the two classifier scores be strongly correlated at the respective time instances, while a larger $\sigma_T$ allows some variance (and hence more robustness) in capturing these correlations. \comment{We assume both $\psi_1$ and $\psi_2$ are RBF kernels so that the theory introduced in Section. XX can be directly used.} In the following, we look at linearizing the kernel in~\eqref{eq:7} for generating higher-order Kernel (HOK) descriptors. The parameter $r$ captures the order statistics of the kernel, as will be clear in the next section; $\zeta_1$ and $\zeta_2$ are weights associated with the kernels, and we assume $\zeta_1,\zeta_2>0, \zeta_1+\zeta_2=1$. $\Lambda$ is the normalization constant associated with the kernel linearization (see~\eqref{eq:gk_exp}). Note that we assume all the sequences are of the same length $n$ in the kernel formulation. This is a mere technicality to simplify our derivations. As will be seen in the next section, our HOK descriptor depends only on the length of one sequence (see Definition~\ref{def:HOK} below).

\subsection{Higher-order Kernel Descriptors}
\label{sec:HOK}
The following easily verifiable result \cite{me_tensor} will be handy in understanding our derivations.
\begin{proposition}
\label{prop:1}
Suppose $a,b\in\reals{d}$ are two arbitrary vectors, then for an ordinal $r>0$
\begin{equation}
\left(a^Tb\right)^{r} = \left\langle a^{\op{r}}, b^{\op{r}}\right\rangle.
\label{eq:3}
\end{equation}
\end{proposition}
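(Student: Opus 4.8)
The plan is to expand both sides of~\eqref{eq:3} in coordinates and verify that they agree. First I would recall from the notation of Section~3.1 that the tensor $a^{\op{r}}$ has its $(i_1,\ldots,i_r)$-th entry equal to $\prod_{j=1}^r a_{i_j}$, that $b^{\op{r}}$ is defined analogously, and that the inner product $\langle\cdot,\cdot\rangle$ between two $r$-mode tensors is the element-wise product summed over all multi-indices. With these conventions the right-hand side of~\eqref{eq:3} becomes
\begin{equation}
\langle a^{\op{r}}, b^{\op{r}}\rangle = \sum_{i_1,\ldots,i_r} \left(\prod_{j=1}^r a_{i_j}\right)\left(\prod_{j=1}^r b_{i_j}\right),
\end{equation}
where each index $i_j$ ranges over $1,\ldots,d$ independently.

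The core step is then to pair each factor $a_{i_j}$ with the corresponding $b_{i_j}$, regroup the summand as $\prod_{j=1}^r (a_{i_j} b_{i_j})$, and factor the nested sum over the independent indices into a product of $r$ identical single sums by the distributive law for finite sums (a finite-dimensional \emph{Fubini}):
\begin{equation}
\sum_{i_1,\ldots,i_r} \prod_{j=1}^r a_{i_j} b_{i_j} = \prod_{j=1}^r \left(\sum_{i_j=1}^d a_{i_j} b_{i_j}\right) = \left(\sum_{i=1}^d a_i b_i\right)^r.
\end{equation}
Recognizing $\sum_{i} a_i b_i = a^T b$ immediately yields $(a^T b)^r$, establishing the identity.

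As a cleaner alternative I would argue by induction on $r$. The base case $r=1$ is just the definition $a^T b = \langle a, b\rangle$, and the inductive step rests on the single auxiliary fact that the tensor inner product is multiplicative over outer products, namely $\langle u \otimes v,\, u' \otimes v'\rangle = \langle u, u'\rangle\,\langle v, v'\rangle$; applying this with $u = a^{\op{r}}$ and $v = a$ gives $(a^T b)^{r+1} = (a^T b)^r\,(a^T b) = \langle a^{\op{r}}, b^{\op{r}}\rangle\,\langle a, b\rangle = \langle a^{\op{(r+1)}}, b^{\op{(r+1)}}\rangle$. I expect no serious obstacle, consistent with the statement being ``easily verifiable''; the only point requiring mild care is the multi-index bookkeeping and the justification that the product and sum may be interchanged, which is valid precisely because all sums are finite and the summand factorizes across the independent indices.
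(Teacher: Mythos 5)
Your proof is correct, and in fact it supplies something the paper itself omits: the paper never proves Proposition~\ref{prop:1}, it merely calls the result ``easily verifiable'' and defers to the citation~\cite{me_tensor}, so there is no in-paper argument to compare against. Your first argument is the standard verification that such a citation has in mind: expand $\left\langle a^{\op{r}}, b^{\op{r}}\right\rangle = \sum_{i_1,\ldots,i_r}\prod_{j=1}^r a_{i_j}\prod_{j=1}^r b_{i_j}$, regroup the summand as $\prod_{j=1}^r \left(a_{i_j} b_{i_j}\right)$, and factor the nested finite sum into a product of $r$ identical sums; the interchange is justified exactly as you say, because all sums are finite and the summand factorizes across the independent indices. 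Your inductive alternative is also sound, but note that its key lemma $\left\langle u\otimes v,\, u'\otimes v'\right\rangle = \left\langle u,u'\right\rangle\left\langle v,v'\right\rangle$ is itself proved by the same coordinate regrouping, so the induction mostly repackages the direct computation rather than giving a genuinely shorter route; the direct expansion is the cleaner choice here.
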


For simplifying the notations, we assume $x^i_t = P(f^i_t)$, the score vector for frame $f_t$ in $S_i$. Further, suppose we have a set of pivots $\pivotset_F$ and $\pivotset_T$ for the classifier scores and the time steps, respectively. Then, applying the kernel linearization in~\eqref{eq:feature_map} to~\eqref{eq:7} using these pivots, we can rewrite each kernel as:
\begin{align}
 \psi_1\left(\frac{x^i_t - x^j_u}{\sigma_F}\right) &\approx \sum_{z_F\in\pivotset_F}\phi_F(x^i_t; z_F)\phi_F(x^j_u; z_F) \\\nonumber
			 &= \Phi_F(x^i_t; \pivotset_F)^T\Phi_F(x^j_u;\pivotset_F)\\
 \psi_2\left(\frac{t - u}{\sigma_T}\right) &\approx \sum_{z_T\in\pivotset_T}\phi_T(t; z_T)\phi_T(u; z_T) \\\nonumber
	 & = \Phi_T(t; \pivotset_T)^T\Phi_T(u;\pivotset_T).
 \label{eq:8}
\end{align}
Substituting~\eqref{eq:8} into~\eqref{eq:7}, we have:

\begin{align}
K(S_i,S_j) \approx \frac{1}{\Lambda}&\sum_{t,u=1}^n\left[\zeta_1\Phi_F(x^i_t; \pivotset_F)^T\Phi_F(x^j_u;\pivotset_F) + \right.\nonumber\\
		 &\quad\left.\zeta_2\Phi_T(t; \pivotset_T)^T\Phi_T(u;\pivotset_T)\right]^r
\end{align}
\begin{align}
=&\frac{1}{\Lambda}\sum_{t,u=1}^n\left\langle\left[\!\!\!\begin{array}{c}\sqrt{\zeta_1}\Phi_F(x^i_t; \pivotset_F)\\ \sqrt{\zeta_2}\Phi_T(t; \pivotset_T)\end{array}\!\!\!\right]^{\op{r}}\!\!\!, \left[\!\!\!\begin{array}{c}\sqrt{\zeta_1}\Phi_F(x^j_u;\pivotset_F)\\ \sqrt{\zeta_2}\Phi_T(u;\pivotset_T)\end{array}\!\!\!\right]^{\op{r}}\right\rangle,
\label{eq:10}
\end{align}
where we applied Proposition~\ref{prop:1} to~\eqref{eq:8}. As each component in the inner product in~\eqref{eq:10} is independent in the respective temporal indexes, we can carry the summation inside the terms leading to:
\begin{align}
\eqref{eq:10}\rightarrow&\left\langle\frac{1}{\sqrt{\Lambda}}\sum_{t=1}^n\left[\begin{array}{c}\sqrt{\zeta_1}\Phi_F(x^i_t; \pivotset_F)\\ \sqrt{\zeta_2}\Phi_T(t; \pivotset_T)\end{array}\right]^{\op{r}},\right.\nonumber\\
&\quad\quad\left. \frac{1}{\sqrt{\Lambda}}\sum_{u=1}^n\left[\begin{array}{c}\sqrt{\zeta_1}\Phi_F(x^j_u;\pivotset_F)\\ \sqrt{\zeta_2}\Phi_T(u;\pivotset_T)\end{array}\right]^{\op{r}}\right\rangle.
\label{eq:11}
\end{align}

%

Using these derivations, now we are ready to formally define our Higher-order Kernel (HOK) descriptor as follows:
\begin{definition}[HOK]
\label{def:HOK}
Let $X=\langle x_1, x_2,\cdots, x_n\rangle, x_i\in [0,1]^d$ are the probability scores from $d$ classifiers for the $n$ frames in a sequence. Then, we define the $r$-th order HOK-descriptor for  $X$ as:
\begin{equation}
\HOK(X) = \frac{1}{\sqrt{\Lambda}}\sum_{u=1}^n\left[\begin{array}{c}\sqrt{\zeta_1} \Phi_F (x_u; \pivotset_F)\\ \sqrt{\zeta_2} \Phi_T(u; \pivotset_T)\end{array}\right]^{\op{r}}
\label{eq:12}
\end{equation}
for pivot sets $\pivotset_F\subset\simplex{d}$ and $\pivotset_T\subset \reals{}$ for the classifier scores and the temporal instances respectively. Further, $\zeta_1,\zeta_2\in[0,1]$ such that $\zeta_1+\zeta_2=1$, and $\Lambda$ is a suitable normalization.
\end{definition}
 
Once the HOK tensor is generated for a sequence, we vectorize it to be used in a linear classifier for training against action labels. As can be verified (i.e., see \cite{me_tensor}), the HOK tensor will be super-symmetric, and thus removing the symmetric entries, the dimensionality of this descriptor is $\left(\begin{array}{c}|\pivotset_F|+|\pivotset_T|+r-1\\ r\end{array}\right)$. In the sequel, we use $r=3$ as a trade-off between performance and the descriptor size. Figure~\ref{fig:pipeline} illustrates our overall HOK generation and classification framework.

\subsection{Power Normalization}
It is often found that using a non-linear operator on higher-order tensors leads to significantly better performance~\cite{me_ATN}. For example, for BOW, unit-normalization is known to avoid the impact of background features, while taking the feature square-root reduces~\emph{burstiness}. Motivated by these observations, we may incorporate such non-linearities to the HOK descriptors as well. As these are higher-order tensors, we apply the following scheme based on the higher-order SVD decomposition of the tensor~\cite{me_tensor,sparse_tensor_cvpr,tensor_eccv}. Let $H$ denote the $\HOK(X)$, then
\begin{align}
&[\Sigma, U_1, U_2,\cdots, U_r] = \HOSVD(H)\\
&\hat{H} =\sign(\Sigma)\left|\Sigma\right|^{\alpha}\times_{i=1}^r U_i 
\end{align} 
where the $U$'s are the orthonormal matrices (which are all the same in our case) associated with the Tucker decomposition~\cite{lathauwer_hosvd} and $\Sigma$ is the core tensor.  Note that, unlike the usual SVD operation for matrices, the core tensor in HOSVD is generally not diagonal. Refer to the notations in Section~\ref{sec:prelims} for the definition of $\times_{i=1}^n$. We use $\hat{H}$ in training the linear classifiers after vetorization. The power-normalization parameter $\alpha$ is selected via cross-validation.

\subsection{Computational Complexity}
\label{sec:complexity}
For $M$ classes, $n$ frames per sequence, $\card{\pivotset}$ pivots, and tensor-order $r$, generating HOK takes $\bigoh(n(M+\card{\pivotset}^r))$. As HOK is super-symmetric, using truncated SVD for a rank $k<<\card{\pivotset}$, HOSVD takes only $\bigoh(k^2|Z|)$ time. See~\cite{tensor_eccv} for more details.

\subsection{Learning HOK Parameters}
An important step in using the HOK descriptor is to find appropriate pivot sets $\pivotset_F$ and $\pivotset_T$. Given that the temporal pivots are uni-dimensional, we select them to be equally-spaced along the time axis after normalizing the temporal indexes to $[0,1]$. For $\pivotset_F$, which operate on the classifier scores, that can be high-dimensional, we propose to use an expectation-maximization algorithm. This choice is motivated by the fact that the entries for  $\zeta_1\Phi_F(x_u; \pivotset_F)$ in~\eqref{eq:10} are essentially computing a soft-similarity between  the classifier score vectors for every frame against the pivots through a Gaussian kernel. Thus modeling the problem in a soft-assignment setup using a Gaussian mixture model is natural, the parameters (the mean and the variance) are learned using the EM algorithm; these parameters are used as the pivot set. Other parameters in the model, such as $\zeta$ are computed using cross-validation. The normalization factor $\Lambda$ is chosen to be $n^2$ where $n$ is sequence length.
\section{Experiments and Results}
\label{sec:expts}
This section provides experimental evidence of the usefulness of our proposed pooling scheme for fine-grained action recognition. We verify this on two popular benchmark datasets for this task, namely, (i) the MPII cooking activities dataset~\cite{rohrbach2012database}, and (ii) the JHMDB dataset~\cite{jhuang2013towards}. Note that we use a VGG-16 model~\cite{Chatfield14} for the two stream architecture for both datasets, which is pre-trained on ImageNet for object recognition and fine-tuned on the respective action datasets. 

\subsection{Datasets}
\paragraph{MPII Cooking Acitivies Dataset~\cite{rohrbach2012database}:}consists of high-resolution videos of cooking activities recorded by a stationary camera. The dataset consists of videos of people cooking various dishes; each video contains a single person cooking a dish, and overall there are 12 such videos in the dataset. There are 64 distinct activities spread across 3748 video clips and one background activity (1861 clips). The activities range from coarse subject motions such as~\emph{moving from X to Y},~\emph{opening refrigerator}, etc., to fine-grained actions such as~\emph{peel},~\emph{slice},~\emph{cut apart}, etc. 

\paragraph{JHMDB Dataset~\cite{jhuang2013towards}:} is a subset of the larger HMDB dataset~\cite{kuehne2011hmdb}, but contains videos in which the human limbs can be clearly visible. The dataset contains 21 action categories such as \emph{brush hair}, ~\emph{pick},~\emph{pour},~\emph{push}, etc. Unlike the MPII cooking activities dataset, the videos in JHMDB dataset are low resolution. Each video clip is a few seconds long. There are a total of 968 videos which are mostly downloaded from the internet. 

\begin{figure*}
	\centering
	\subfigure[]{\includegraphics[width=5.5cm,trim={0 7cm 0 10cm}]{./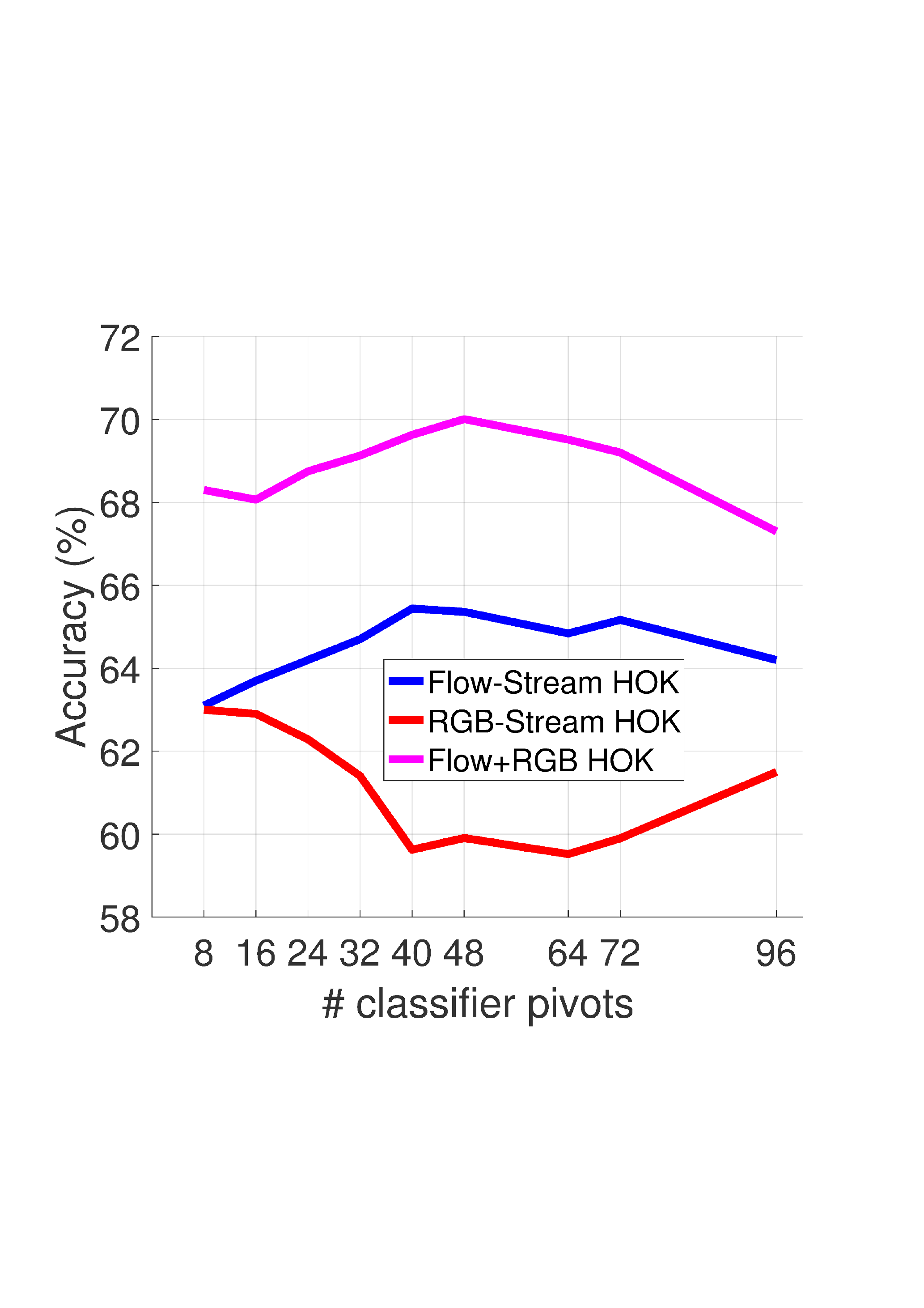}\label{fig:a}}
	\subfigure[]{\includegraphics[width=5.5cm,trim={0 7cm 0 10cm}]{./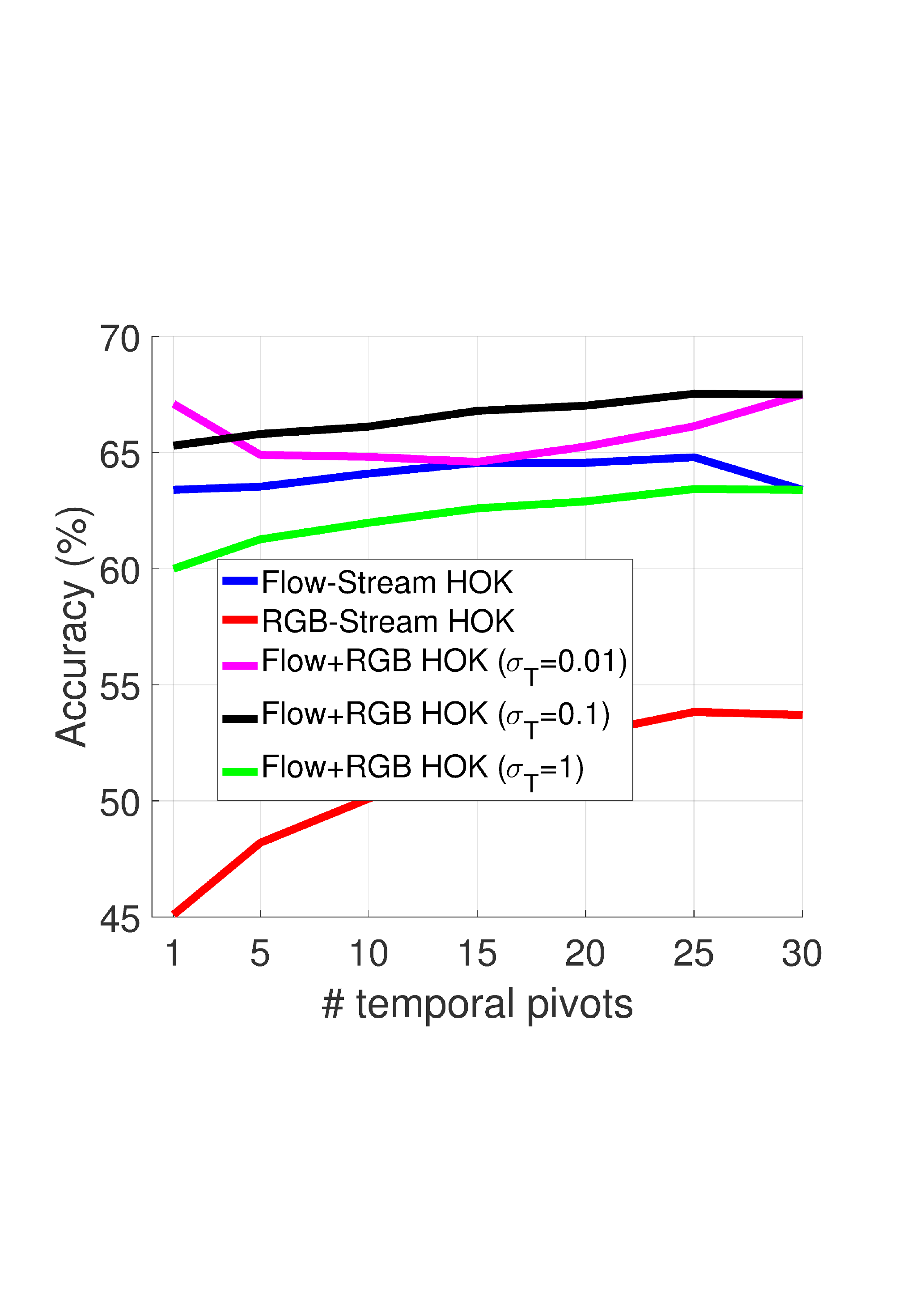}\label{fig:b}}
	\subfigure[]{\includegraphics[width=5.5cm,trim={0 7cm 0 10cm}]{./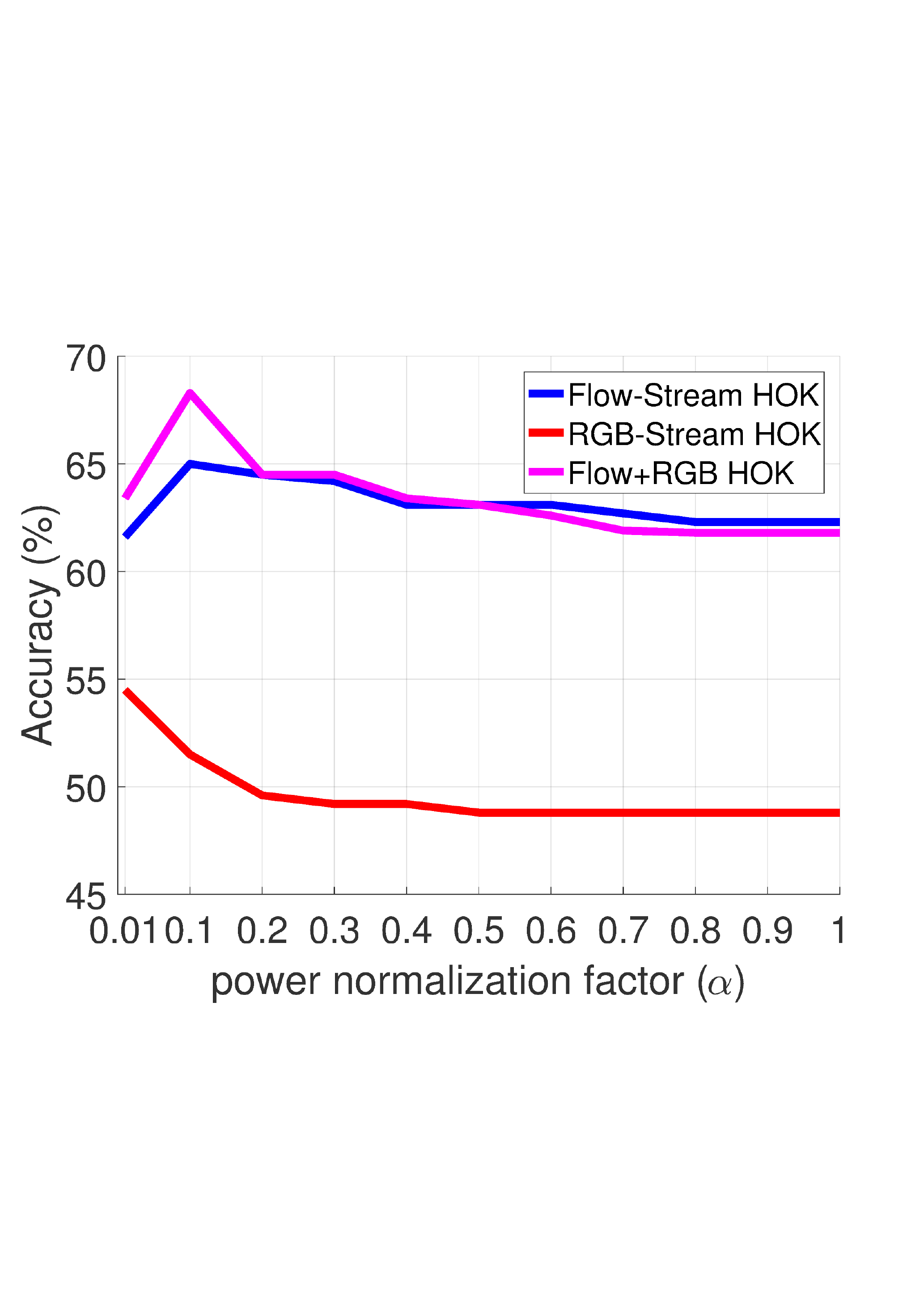}\label{fig:c}}
	\caption{Analysis of the influence of various hyper-parameters on the action recognition accuracy. The numbers are computed on the split1 of the JHMDB dataset, which consists of 21 ground truth action classes.}
	\label{fig:analysis}
\end{figure*}

\subsection{Evaluation Protocols} We follow the standard protocols suggested in the original publications that introduced these datasets. Thus, we use the mean average precision (mAP) over 7-fold cross-validation for the MPII dataset, while we use the mean average accuracy over 3-fold cross-validation for the JHMDB dataset. For the former, we use the evaluation code published with the dataset.

\subsection{Preprocessing} As the original MPII cooking videos are of very high resolution, while the activities happen only at certain parts of the scene, we used a frame difference scheme to estimate a window of the scene to localize the action. Precisely, for every sequence, we first convert the frames to half their sizes, followed by frame-differencing, dilation, smoothing, and connected component analysis. This results in a binary image for every frame; which are then combined across the sequence and a binary mask is generated for the entire sequence. We use the largest bounding box containing all the connected components in this binary mask as the region of the action, and crops the video to this box. Such cropped frames are then resized to $224\times 224$ size and used to train the VGG networks. To compute optical flow, we used the Brox implementation~\cite{brox2011large}. Each flow image is rescaled to 0--255 and saved as a JPEG image for storage efficiency as described in~\cite{simonyan2014two}. For the JHMDB dataset, the frames are already in low resolution. Thus, we directly use them in the CNN after resizing to the expected input sizes.

\subsection{CNN Training} The two streams of the CNN are trained separately on the respective input modalities against a softmax cross-entropy loss. We used the sequences from the training set of the MPII cooking activities dataset for training the CNNs (1992 sequences) and used those from the provided validation set (615 sequences) to check for overfitting. For the JHMDB, we used 50\% of the training set for fine-tuning the CNNs of which 10\% is used as the validation set. We augmented the datasets using random crops, flips and slight rotations of the frames. While fine-tuning the CNNs (from a pre-trained imagenet model), we used a fixed learning rate of $10^{-4}$ and an input batch size of 50 frames. The CNN training was stopped as soon as the loss on the validation set started increasing, which happened in about 6K iterations for the appearance stream and 40K iterations for the flow stream. 

\subsection{HOK Parameter Learning}
As is clear from Def.~\ref{def:HOK}, there are a few hyper-parameters associated with the HOK descriptor. In this section, we systematically analyze the influence of these parameters to the overall classification performance of the descriptor. To this end, we use the JHMDB dataset split1. Specifically, we explore the effect of changes to (i) the number of classifier pivots $\pivotset_F$, (ii) the number of temporal pivots $\pivotset_T$, and (iii) that of the power-normalization factor $\alpha$. In Figure~\ref{fig:analysis}, we plot the classifier accuracy against each of these cases. Each experiment is repeated 5 times with different initializations (for the GMM) and the average accuracy is used for the plots. 

For (i), we fixed the number of temporal pivots to 5, with values $[0,0.25,0.5,0.75, 1]$ and fixed the $\sigma_T=0.1$. The classifier pivots $\pivotset_F$ and and their standard deviations $\sigma_F$ are found by learning GMM models with the prescribed number of Gaussian components. The mean and the diagonal variance from this learned model are then used as the pivot set and its variance respectively. As is clear from Figure~\ref{fig:a}, as the number of classifier pivots increases, the accuracy increases as well. However, beyond a certain number, the accuracy starts dropping. This is perhaps due to the sequences not containing sufficient number of frames to account for larger models. Note that the JHMDB sequences contain about 30-40 frames per sequence. We also note that the accuracy of Flow+RGB is significantly higher than either stream alone.

For (ii), we fix the number of classifier pivots at 48 (as is the best we found from Figure~\ref{fig:b}), and varied the number of temporal pivots from 1 to 30 in steps of 5. Similar to the classifier pivots, we find that increasing the number of temporal pivots is beneficial. Further, a larger $\sigma_T$ leads to a drop in accuracy, which implies that ordering of the probabilistic scores does play a role in the recognition of the activity.

For (iii), we fixed the number of classifier pivots at 48, and the number of temporal pivots to 5 (as described for (i) above). We varied $\alpha$ from 0.1 to 1 in steps of 0.1. We find that $\alpha$ closer to 0 is more promising, implying that there is significant influence of burstiness in the sequences. That is, reducing more the larger probabilistic co-occurrences (than those of the weak co-occurrences) in the tensor leads to better performance.

\subsection{Results}
In this section, we provide full experimental comparisons for the two datasets. Our main goal is to analyze the usefulness of higher-order pooling for action recognition. To this end, in Table~\ref{tab:higher_order_pooling}, we show the performance differences between using (i) the first-order statistics, (ii) the second-order statistics and our proposed third-order. For (i), we average the classifier soft-max scores as is usually done in late pooling~\cite{simonyan2014two}. For (ii), we use the second-order kernel matrix without pivoting. Specifically, for every sequence, let $x^i_{:}$ and $x^j_{:}$ denote the probabilistic evolution of probablistic scores for classifiers $i$ and $j$ respectively. Then, we compute a kernel matrix $K(x_{:}^i, x_{:}^j)=\myexp{-\sigma\enorm{x_{:}^i-x_{:}^j}^2}$. As this matrix is a positive definite object, we use log-Euclidean map of it (that is, the matrix logarithm; which is the asymptotic limit of $\alpha\rightarrow 0$ in power normalization) for embedding it in the Euclidean space~\cite{guo2013action}. This vector is then used for training. As is clear, this matrix captures the second-order statistics of actions. And for (iii), we use the proposed $\HOK$ descriptor as described in Definition~\ref{def:HOK}. As is clear from Table~\ref{tab:higher_order_pooling}, higher-order statistics leads to significant benefits on both the datasets and for both the input modalities (flow and RGB) and their combinations.

\begin{table}[]
	\centering
	\begin{tabular}{c|c|c}
		Action & Avg. Pool  & HOK. Pool  \\ 
		           & mAP (\%)  & mAP (\%) \\
		\hline
		Change Temperature & 15.1 & 57.5 \\
        Dry   & 27.7 & 50.2 \\
        Fill water from tap & 10.5 & 40.6 \\
        Open/close drawer & 25.2 & 65.1 \\ 
	\end{tabular}    
	\caption{An analysis of per-class action recognition accuracy when using average pooling and HOK pooling (the top classes corrected by HOK pooling).
	}
	\label{tab:comparison_table}
\end{table}

\subsection{Comparisons to the State of the Art}
In Tables~\ref{tab:mpii_soa} and~\ref{tab:jhmdb_soa}, we compare HOK descriptors to the state-of-the-art results on these two datasets. In this case, we combine the HOK descriptors from both the RGB and flow streams of the CNN. For the MPII dataset, we use 32 pivots for the classifier scores, and 5 equispaced pivots for the time steps, with $\sigma_T=0.1$. For the second-order tensor, we use a $\sigma=0.1$ for both datasets. We use the same setup for the JHMDB dataset, except that we use 48 pivots. The power normalization factor is set to 0.1. As is clear, although HOK by itself is not superior to other methods, when the second- and third-order statistics are combined (stacking their values into a vector), it demonstrates significant promise. For example, we see an improvement of 5--6\% against the recent method in~\cite{cheron2015p} that also uses a CNN. Further, we also find that when the higher-order statistics are combined with trajectory features, there is further improvement in accuracy, which results in a model that outperforms the state of the art. 

\subsection{Analysis of Results}
To gain insights into the performance benefits noted above, we conducted an analysis of the results on the MPII dataset. Table~\ref{tab:comparison_table}  lists the activities that are initially confused in average pooling, while corrected by HOK. Specifically, we find that activities such as \emph{Fill water from tap} and \emph{Open/close Drawer} which are originally confused with~\emph{Wash Objects} and~\emph{Take out from drawer} gets corrected using higher-order pooling. Note that these activities are inherently ambiguous, unless context and sub-actions are analyzed. This shows that our descriptor can effectively represent useful cues for recognition.

In Table~\ref{tab:higher_order_pooling} (column 1), we see that  the second-order tensor performs significantly better than HOK for the MPII dataset. We suspect this surprising behavior is due to the highly unbalanced number of frames in sequences in this dataset. For example, for classes such as~\emph{pull-out},~\emph{pour}, etc., that have only about 7 clips each of 50--90 frames, the second-order is about 30\% better than HOK in mAP, while for classes, such as~\emph{take spice holder}, having more than 25 videos, with 50--150 frames, HOK is about 10\% better than second-order. This suggests that the poor performance is perhaps due to the unreliable estimation of data statistics and that second- and third-order provide complementary cues, as also witnessed in Table~\ref{tab:mpii_soa}. For the JHMDB dataset, there are about 30 frames in all sequences and thus the statistics are more consistent. Another reason could be that unbalanced sequences may bias the GMM parameters, that form the pivots, to classes that have more frames. 

\begin{table}[]
	\centering
	\begin{tabular}{c|c|c}
		Experiment & \small{MPII}  & JHMDB\\
		           & mAP (\%)      & Mean Avg. Acc (\%)\\ 
		\hline
		RGB (avg.pool)     & 33.9  & 51.5\\
		Flow (avg.pool)     & 37.6  & 54.8\\
		RGB + Flow (avg.pool)& 38.1  & 55.9\\
		\hline
		RGB  (second-order) & 56.1 & 52.3\\
		Flow (second-order) & 61.3 & 60.4\\
		RGB + Flow (second-order) & 67.0 & 63.4\\
		\hline
		RGB (HOK) & 47.8 & 52.3\\
		Flow (HOK) & 55.4 &  58.2\\
		RGB + Flow (HOK) & 60.6 & 64.7
	\end{tabular}    
	\caption{Evaluation of the HOK descriptor on the output of each CNN stream and their fusion on the MPII (7-splits) and JHMDB datasets (3-splits). We also show the accuracy obtained via second-order pooling that uses the kernel matrix directly without linearization (see text for details).
	}
	\label{tab:higher_order_pooling}
\end{table}

\begin{table}[]
		\centering
		\begin{tabular}{c|c}
			Algorithm &  mAP(\%) \\
			\hline
			Holistic + Pose, CVPR'12 & 57.9 \\
			Video Darwin, CVPR'15    & 72.0 \\
			Interaction Part Mining, CVPR'15 & 72.4 \\
			P-CNN, ICCV'15 & 62.3 \\
			P-CNN + IDT-FV, ICCV'15 & 71.4 \\
			Semantic Features, CVPR'15 & 70.5 \\
			Hierarchical Mid-Level Actions, ICCV'15 & 66.8\\
			\hline
			HOK (ours) & 60.6 \\
			HOK + Second-order (ours) & 69.1 \\
			HOK + second-order + Trajectories & \textbf{73.1}\\
		\end{tabular}
		\caption{MPII Cooking Activities dataset (7-splits)}
        \label{tab:mpii_soa}
\end{table}
\begin{table}[]
		\centering
		\begin{tabular}{c|c}
			Algorithm &  Avg. Acc. (\%) \\
			\hline
			P-CNN, ICCV'15 & 61.1 \\
			P-CNN + IDT-FV, ICCV'15 & 72.2 \\
			Action Tubes, CVPR'15 & 62.5\\
			Stacked Fisher Vectors, ECCV'14 & 69.03\\
			IDT + FV, ICCV'13 & 62.8 \\
			\hline
			HOK (Ours) & 64.7\\
			HOK + second-order (Ours) & 66.8 \\
			HOK + second-order + IDT-FV & \textbf{73.3}
		\end{tabular}
		\caption{JHMBD Dataset (3-splits)}
        \label{tab:jhmdb_soa}
\end{table}

\section{Conclusion}
\label{sec:conclude}
In this paper, we presented a technique for higher-order pooling of CNN scores for the task of action recognition in videos. We showed how to use the idea of kernel linearization to generate a higher-order kernel descriptor, which can capture latent relationships between the CNN classifier scores. Our experimental analysis on two standard fine-grained action datasets clearly demonstrates that using higher-order relationships is beneficial for the task and leads to state-of-the-art performance. 
\noindent\paragraph{{Acknowledgements:}}{This research was supported by the Australian Research Council (ARC) through the Centre of Excellence for Robotic Vision (CE140100016).

{\small
\bibliographystyle{ieee}
\bibliography{wacv_fgar_bib}
}

\end{document}